\newtheorem{theorem}{Theorem}
\newtheorem{lemma}[theorem]{Lemma}%
\newtheorem{proposition}[theorem]{Proposition}%
\newcommand{\R}{\mathbb{R}}
\newcommand{\Exp}{\mathbb{E}}
\newcommand{\Dcal}{\mathcal{D}}
\newcommand{\Fcal}{\mathcal{F}}
\newcommand{\Xcal}{\mathcal{X}}
\newcommand{\Ycal}{\mathcal{Y}}
\newcommand{\FR}{\mathrm{FR}}
\newcommand{\KL}{\mathrm{KL}}
\newcommand{\Hell}{\mathrm{H}}
\newcommand{\CE}{\mathrm{CE}}
\newcommand{\MAE}{\mathrm{MAE}}
\newcommand{\MSE}{\mathrm{MSE}}
\newcommand{\qCE}{{\text{$q$-}\mathrm{CE}}}
\newcommand{\diff}{\mathrm{d}}
\newcommand{\T}{\mathsf{T}}
\newcommand{\suchthat}{\mid}
\let\@fnsymbol\@arabic
\newcommand*\samethanks[1][\value{footnote}]{\footnotemark[#1]}
\title{\textbf{The Fisher--Rao Loss for Learning under Label Noise}}
\author{Henrique~K.~Miyamoto\thanks{
	The authors are with the Institute of Mathematics, Statistics and Scientific Computing~(IMECC), University of Campinas~(Unicamp), Brazil. E-mail: \texttt{hmiyamoto@ime.unicamp.br}, \texttt{fabiom@ime.unicamp.br}, \texttt{sueli@unicamp.br}.
	}
	\and Fábio~C.~C.~Meneghetti\samethanks \and Sueli~I.~R.~Costa\samethanks}
\date{\vspace{-1.5em}}
\begin{document}

\maketitle

\begin{abstract}
	Choosing a suitable loss function is essential when learning by empirical risk minimisation.
	In many practical cases, the datasets used for training a classifier may contain incorrect labels, which prompts the interest for using loss functions that are inherently robust to label noise.
	In this paper, we study the Fisher--Rao loss function, which emerges from the Fisher--Rao distance in the statistical manifold of discrete distributions.
	We derive an upper bound for the performance degradation in the presence of label noise, and analyse the learning speed of this loss. Comparing with other commonly used losses, we argue that the Fisher--Rao loss provides a natural trade-off between robustness and training dynamics.
	Numerical experiments with synthetic and MNIST datasets illustrate this performance.
	
	\vspace{1em}
	
	\noindent \textbf{Keywords:} Classification, Fisher--Rao distance, information geometry, label noise, loss functions, neural networks
\end{abstract}

\section{Introduction} \label{sec:intro}

Supervised classification is an important problem in machine learning. Training a classifier (e.g., a deep neural network) can be done by empirical risk minimisation: a numerical optimisation algorithm is applied to find the model parameters that minimise the mean value of a loss function on the training dataset. Choosing a suitable loss function is essential, since different choices can affect the performance of the resulting classifier, as well as the training dynamics.

The output of a neural network trained for classification is often interpreted as giving a conditional probability distribution $p(y \rvert \pmb{x})$ of the class~$y$ given the input~$\pmb{x}$, which prompts the use of cross entropy as a loss function~\cite{bishop2006,goodfellow2016,calin2020}. Although originally used for regression problems, the mean squared error is also used as loss function, and several works have compared these two losses~\cite{kline2005,golik2013,janocha2016,demirkaya2020,hui2021}. Moreover, the design of new loss functions has been a topic of interest, and those are often tailored for specific problems or contexts, with many different inspirations, such as the correntropy similarity measure~\cite{singh2010}, the Wasserstein distance~\cite{frogner2015,hou2017}, and persistent homology~\cite{clough2020}.

A case of practical interest is when training datasets are corrupted with label noise, i.e., some of the class labels may be incorrect. This is a well-studied problem in machine learning: one of its sources is crowdsourcing labelling, and it can impact the performance of the generated model~\cite{frenay2014, sastry2017}. Many of the proposed solutions to mitigate this issue involve modifying the learning algorithms and have no theoretical guarantees of robustness. On the other hand, changing only the loss functions can be easily implemented and does not require further modifications to the structure of the classifiers. In addition, a sufficient condition for a loss function to be robust to label noise, i.e., for a classifier trained with that loss to achieve the same performance as if it was trained with clean data, was provided in~\cite{ghosh2015,ghosh2017}. These works also show that commonly used cross entropy and mean squared error losses do not satisfy this condition, while the mean absolute value loss does, but results in slow training. Other robust losses for label noise have been studied in~\cite{kumar2018,zhang2018}.

In this work, we study a loss function based on an information-geometric framework~\cite{amari1998,amari2000}. It is well known that the Fisher metric is the natural Riemannian metric for statistical manifolds, since it is essentially the only one that is invariant under sufficient statistics~\cite{ay2015}. In the case of discrete distributions, the geometry of the corresponding statistical manifold coincides with a spherical geometry; therefore, the Fisher--Rao distance between two points, as the geodesic distance induced by the Fisher metric, can be easily computed as the arc-length of an arc of great circle.
The proposed loss function for training classifiers is proportional to the squared Fisher--Rao distance in the manifold of discrete distributions, and has a twofold motivation: first, by the common interpretation that a classifier tries to minimise the error between predicted and ground truth class distributions, and, second, by the fact that the Fisher--Rao distance is a natural distance from the information-geometric viewpoint.

The Fisher--Rao distance has been recently used in unsupervised learning for shape clustering~\cite{gattone2017}, clustering financial returns~\cite{taylor2019} and image segmentation~\cite{pinele2020}. It has also found application as a regulariser term for adversarial learning~\cite{picot2021}, in detection of out-of-distribution samples~\cite{gomes2022}, and to unravel the geometry in the latent space of generative models~\cite{arvanitidis2022}. Here, we use the Fisher--Rao distance to construct a loss functions on its own, in a standard classification framework, and we compare it with other losses, particularly in the case of label noise. Specifically, we provide an upper-bound for its robustness against label noise, and analyse the learning speed using this loss. We argue that it provides a natural trade-off between these two aspects, and illustrate our theoretical analysis with numerical experiments.

Another well-known application of information geometry in deep learning consists in studying the geometry of neuromanifolds, i.e., the statistical manifolds formed by the joint distribution of the network inputs and outputs, parametrised by the network weights and biases~\cite{calin2020}. This gives rise to the celebrated natural gradient method~\cite{amari1998}, which modifies the gradient descent method by replacing the usual gradient with the gradient computed on this neuromanifold, using the Fisher information metric. We take a different approach in that we focus on the geometry of the outputs of the network, which corresponds to the manifold of discrete distributions, in order to also modify the learning algorithm, but by choosing a different loss function.

The remainder of this paper is organised as follows. Section~\ref{sec:inf-geom} reviews the geometry of the statistical manifold of discrete distributions. In Section~\ref{sec:learning}, the learning problem is described, and the proposed loss functions is analysed in terms of robustness to label noise and learning speed. Numerical results are presented and discussed in Section~\ref{sec:results}. Section~\ref{sec:conclusion} concludes the paper.

\section{Information Geometry of Discrete Distributions} \label{sec:inf-geom}

Let $(\Omega, \Fcal, \mu)$ be a $\sigma$-finite measure space and
\begin{equation*}
	M = \left\{ p_\theta = p(x;\theta) \suchthat \theta = (\theta_1, \dots, \theta_n) \in \Theta \subseteq \R^n \right\}
\end{equation*}
a parametric family of probability density functions $p_\theta \colon \Omega \to \R_+$ with respect to the measure~$\mu$. If $M$ is smoothly parametrised by $\theta \in \Theta \subset \R^n$ and satisfies certain regularity conditions~\cite{amari2000}, a Riemannian metric can be introduced on~$M$, called the \emph{Fisher metric}, given in matrix form by the \emph{Fisher information matrix} $G(\theta) = [g_{ij}(\theta)]_{i,j}$, with
\begin{equation}
	g_{ij}(\theta) = \Exp_{p_\theta} \!\! \left[ \frac{\partial \ell}{\partial \theta_i} \frac{\partial \ell}{\partial \theta_j} \right],
\end{equation}
for $1 \le i,j \le n$, where $\ell(\theta) \coloneqq \log p_\theta$ is the log-likelihood function and $\Exp_{p_\theta}[f] = \int_\Omega p_\theta(x) f(x)~\diff \mu(x)$ denotes the expectation of $f(x)$ with respect to $p_{\theta}$. Furthermore, considering the Kullback-Leibler divergence between two probability densities $p$ and $q$,
\begin{equation*}
	D_\KL (p\|q) = \int_{\Omega} p(x) \log \frac{p(x)}{q(x)}~\diff \mu(x),
\end{equation*}
the Fisher matrix can be also viewed as the Hessian of the Kullback-Leibler divergence, in the sense that
\begin{equation*}
	g_{ij}(\theta_0) = \left. \frac{\partial^2}{\partial \theta_i \partial \theta_j} D_\KL (p_{\theta_0} \| p_{\theta}) \right\vert_{\theta=\theta_0}.
\end{equation*}

With this structure, which is independent of the parametrisation $\theta \mapsto p_\theta$, $M$ is called a \emph{statistical manifold}. This is a natural choice for the geometry of probability distributions, as the Fisher metric is essentially the unique (up to a constant) Riemannian metric that is invariant under sufficient statistics on these manifolds~\cite{ay2015}.

Given a curve $\theta \colon \left[0,1\right] \to \Theta$ in the parameter space, the length of the corresponding curve $\gamma(t) = p_{\theta(t)}$ in the manifold $M$ is computed, in the Fisher geometry, as
\begin{equation*}
	l(\gamma) \coloneqq \int_0^1 \|\dot\gamma(t)\|_{G(\theta(t))}~\diff t
	= \int_0^1 \sqrt{ \dot\theta(t)^\T G (\theta(t)) \dot\theta(t)}~\diff t.
\end{equation*}
The \emph{Fisher--Rao distance} between two distributions $p_{\theta_1}$ and $p_{\theta_2}$ in $M$ is defined as the infimum of the length of all piecewise smooth paths linking them, i.e.,
\begin{equation*}
	d_\FR(p_{\theta_1}, p_{\theta_2}) \coloneqq \inf_{\gamma} \left\{ l(\gamma) \suchthat \gamma(0)=p_{\theta_1},\  \gamma(1)=p_{\theta_2} \right\}.
\end{equation*}
Moreover, when $M$ is a complete and connected manifold, this distance is realised by the length of a geodesic, called a \emph{minimising geodesic}. Unfortunately, closed-form expressions for the Fisher--Rao distance are only known in a few statistical manifolds, e.g., \cite{atkinson1981,calin2014,costa2015}.

In this paper, we are interested in the statistical manifold structure of discrete probability distributions. Given a sample space $\Omega = \left\{1, \dots, K \right\}$ and the set of functions $\delta^i \colon \Omega \to \{0,1\}$, defined by the Kronecker delta $\delta^i(j) \coloneqq \delta_{ij}$, we hereafter consider the statistical manifold of discrete distributions
\begin{equation*}
	M =
	\left\{ p = \textstyle\sum_{i=1}^K p_i \delta^i \suchthat p_i \in [0,1],\  \sum_{i=1}^K p_i =1 \right\},
\end{equation*}
which is in correspondence with the probability simplex
\begin{equation*}
	\Delta^{K-1} \coloneqq \left\{ \pmb{p}=(p_1, \dots, p_K) \suchthat p_i \in [0,1],\ \textstyle\sum_{i=1}^{K} p_i = 1 \right\}.
\end{equation*}
Furthermore, given the set of parameters
\begin{equation*}
	\Theta \coloneqq \left\{\theta = (p_1, \dots, p_{K-1}) \in \R^{K-1} \suchthat p_i \ge 0, \ \textstyle\sum_{i=1}^{K-1} p_i \le 1 \right\},
\end{equation*}
there is a standard parametrisation $\varphi\colon \Theta \to M$, of the form $\varphi(\theta) = \sum_{i=1}^K p_i \delta^i$, with $p_K \coloneqq 1- \sum_{i=1}^{K-1} p_i$. It is straightforward to show~\cite{calin2014} that, in this parametrisation, the Fisher information matrix has the form
\begin{equation*}
	g_{ij}(\theta) = \sum_{x \in \Omega} p(x) \frac{\partial \log p(x)}{\partial p_i} \frac{\partial \log p(x)}{\partial p_j} = \frac{\delta_{ij}}{p_i} + \frac{1}{p_K},
\end{equation*}
for $1 \le i,j \le K-1$.

In this case, there is a simpler way to obtain the geodesics than solving the geodesics differential equations. Considering the reparametrisation $z_i \coloneqq 2\sqrt{p_i}$, as in~\cite{kass1997,calin2014}, we have $\sum_{i=1}^{K} z_i^2 = 4$, so that the simplex $\Delta^{K-1~}$ is mapped to the part of the radius-two sphere with positive coordinates $S_{2,+}^{K-1} \coloneqq \left\{ \pmb{z} = (z_1,\dots,z_K) \in \R_+^{K} \suchthat \|\pmb{z}\|_2 = 2 \right\}$. The expression for the Fisher metric in this new parametrisation is
\begin{equation*}
	g_{ij} (\theta)
	= 4 \sum_{k=1}^K \pd{\sqrt{p_k}}{p_i} \pd{\sqrt{p_k}}{p_j}
	= \left\langle \frac{\partial \pmb z}{\partial p_i}, \frac{\partial \pmb z}{\partial p_j} \right\rangle.
\end{equation*}
Note that this is the formula of the standard spherical metric on $S_{2,+}^{K-1}$, i.e., the Euclidean metric in $\R^K$ restricted to $S_{2,+}^{K-1}$. As geodesics on the sphere are given by arcs of great circles, the length of a geodesic connecting $\pmb{z}_p$ and $\pmb{z}_q$ in $S_{2,+}^{K-1}$ (reparametrisation of the distributions $p = \textstyle\sum_{i=1}^K p_i \delta^i$ and $q = \textstyle\sum_{i=1}^K q_i \delta^i$, respectively) is double the angle $\alpha$ between these two vectors:
\begin{equation*}
	2\alpha
	= 2\arccos \left\langle \frac{\pmb z_p}{2}, \frac{\pmb z_q}{2} \right\rangle
	= 2\arccos \del[4]{ \sum_{i=1}^K \sqrt{p_i q_i} }.
\end{equation*}
Therefore, we obtain that the \emph{Fisher--Rao distance} between the discrete distributions $p = \textstyle\sum_{i=1}^K p_i \delta^i$ and $q = \textstyle\sum_{i=1}^K q_i \delta^i$ in $M$ is
\begin{equation} \label{eq:Fisher--Raor-dist}
	d_\FR (p,q) = 2\arccos \del[4]{ \sum_{i=1}^K \sqrt{p_i q_i} }.
\end{equation}

The reparametrisation to $S_{2}^{K-1}$ prompts an immediate approximation for the Fisher--Rao distance, given by the chordal distance, i.e., the Euclidean distance in $\R^K$, between $\pmb{z}_p$ and $\pmb{z}_q$:
\begin{equation*}
	\|\pmb z_p - \pmb z_q\|_2
	= 2\del[4]{ \sum_{i=1}^K ( \sqrt{p_i} - \sqrt{q_i} )^2 }^{1/2}.
\end{equation*}
This happens to be double the value of the the \emph{Hellinger distance}, a well-known distance between probability distributions~\cite[§~2.4]{tsybakov2009}:
\begin{equation} \label{eq:hellinger-dist}
	d_\Hell (p,q) = \del[4]{ \sum_{i=1}^{K} ( \sqrt{p_i} - \sqrt{q_i} )^2 }^{1/2},
\end{equation}
which is related to the Fisher--Rao distance by
\begin{equation} \label{eq:H-2sin-FR}
	d_\FR(p,q) = 4 \arcsin \left( \frac{d_\Hell(p,q)}{2} \right).
\end{equation}
Finally, note that, using the Taylor series expansion of the arc-sine function, we get the asymptotic approximation
\begin{equation} \label{eq:H-FR-approx}
	d_\FR(p,q) = 2 \, d_\Hell (p,q) + O\big( d_\Hell^3 (p,q)\big).
\end{equation}

\section{Learning for Classification} \label{sec:learning}

\subsection{Loss Functions}

We consider a supervised classification setting: $\pmb{x} \in \Xcal \subseteq \R^m$ denotes the feature vector, which belongs to exactly one class $y \in \Ycal \coloneqq \{ 1, \dots, K \}$, and we assume the data to follow a joint distribution $(\pmb{x}, y) \sim \Dcal$. Our goal is to learn a classifier (e.g., a neural network) $f\colon \Xcal \to \R^K$ that assigns to each input $\pmb{x}$ a vector of scores $\pmb{s} = (s_1, \dots, s_K) \coloneqq f(\pmb{x})$, which can be used to induce a hard decision $\hat{y} = \arg\max_{1 \le i \le K} s_i$. By applying the softmax function $\sigma \colon \R^K \to \Delta^{K-1}$ to the scores $\pmb{s}$, one obtains a conditional probability distribution $P(y \rvert \pmb{x})$ on~$\Ycal$, represented by the vector $\pmb{p} = (p_1, \dots, p_K) \coloneqq (\sigma \circ f)(\pmb{x}) = \sigma\left((s_1, \dots, s_K)\right)$, with $p_i = {e^{s_i}}/{\sum_{j=1}^{K} e^{s_j}}$, for $1 \le i \le K$.

Learning the model can be done by empirical risk minimisation, which is the population version of the problem $\min_{f} \mathbb{E}_{\Dcal}[L(y,f(\pmb{x}))]$ under a suitably chosen loss function $L \colon {\Ycal \times \R^K} \to \R_+$. In the following, we list some possible choices. Two popular criteria are mean squared error and the cross entropy loss functions. Let $\pmb{e}^{(y)} \in \R^K$ denote the one-hot vector that has value $1$ in the $y$-th coordinate, and value $0$ in the other coordinates.

The \emph{mean squared error}~(MSE) loss is defined as
\begin{equation} \label{eq:mse-loss}
	L_{\MSE}(y,f(\pmb{x}))
	\coloneqq \|\pmb{e}^{(y)} - (\sigma \circ f)(\pmb{x})\|_2^2
	= \|\pmb{p}\|_2^2 - 2p_y + 1,
\end{equation}
i.e., it is the squared Euclidean distance in the ambient space $\R^K$ restricted to $\Delta^{K-1}$.

The \emph{mean absolute error}~(MAE) loss\footnote{
	We adopt the coefficient $1/2$ to normalise it in the comparison with other loss functions. In this case, this loss coincides with the total variation distance~\cite[§~2.4]{tsybakov2009}.
} is defined as
\begin{equation} \label{eq:mae-loss}
	L_{\MAE}(y,f(\pmb{x}))
	\coloneqq \frac{1}{2}\|\pmb{e}^{(y)} - (\sigma \circ f)(\pmb{x})\|_1
	= 1-p_y,
\end{equation}
which is the taxicab distance in $\R^K$ restricted to $\Delta^{K-1}$.

The \emph{cross entropy}~(CE) loss is given by
\begin{equation} \label{eq:ce-loss}
	L_{\CE}(y,f(\pmb{x})) \coloneqq - \log \left( \pi_y \left( (\sigma \circ f)(\pmb{x}) \right) \right) = - \log  p_y,
\end{equation}
where $\pi_y(\pmb{v})$ denotes the projection of vector $\pmb{v}$ to its $y$-th component. Note that ${L}_{\mathrm{CE}}(y,f(\pmb{x})) = D_\KL ( \pmb{e}^{(y)} \| (\sigma \circ f)(\pmb{x}) )$, i.e., in this case, the cross entropy is equal to the Kullback-Leibler divergence between the true conditional distribution $\pmb{e}^{(y)}$, which is in fact deterministic, and the predicted one $\pmb{p} = (\sigma \circ f)(\pmb{x})$.

An interesting generalisation of the cross entropy loss can be obtained by replacing the logarithm in \eqref{eq:ce-loss} with the Tsallis $q$-logarithm\footnote{
	In this paper, $\log_q$ denotes the Tsallis $q$-logarithm, not to be confused with the logarithm to the base $q$.
}~\cite{tsallis1994}:
\begin{equation} \label{eq:q-logarithm}
	\log_q (x) \coloneqq
	\begin{cases}
		\frac{x^{1-q} - 1}{1-q}, &q\neq 1\\
		\log x, &q=1
	\end{cases},
\end{equation}
defined for $x>0$ and $q \in \R$. In particular, for $q \in \left[0, 1\right)$, this is the loss function proposed in~\cite{zhang2018}, where it was defined in terms of the negative Box-Cox transformation\footnote{
	It should be noted that our parameter~$q$ used in \eqref{eq:q-loss} corresponds to the parameter $1-q$ used in~\cite{zhang2018}.
}, and given by
\begin{equation} \label{eq:q-loss}
	L_{\qCE}(y,f(\pmb{x})) \coloneqq - \log_{q} \left( \pi_y \left((\sigma \circ f)(\pmb{x}) \right) \right) = - \log_{q} p_y ,
\end{equation}
to which we refer as \emph{$q$-cross entropy} loss ($q$-CE). Interestingly, as remarked in~\cite{zhang2018}, for $q=0$, it coincides with the MAE loss, and, for $q=1$, with the CE loss.

The main contribution of this work is the study of the Fisher--Rao loss function, inspired by an information-geometric framework, cf. Section~\ref{sec:inf-geom}. This will be defined, up to multiplicative constant, as the square of the Fisher--Rao distance in the manifold of discrete distributions. Taking the square is reasonable, as other loss functions, such as the MSE and $q$-CE losses, behave, at least locally, as squared distances. Using~\eqref{eq:Fisher--Raor-dist}, we can compute
\begin{equation*}
	d^2_{\FR} \big( \pmb{e}^{(y)}, \pmb{p} \big) 
	= 4 \left(\arccos \sqrt{ \pi_y (\pmb{p})} \right)^2 \nonumber
	= 4 \left( \arccos \sqrt{p_y} \right)^2.
\end{equation*}
Therefore, disregarding the multiplicative constant---which is fine, since multiplying the loss is equivalent to multiplying the learning rate---, we define the \emph{Fisher--Rao loss} as
\begin{equation} \label{eq:fr-loss}
	L_{\FR}(y,f(\pmb{x}))
	\coloneqq \frac{1}{4} d^2_{\FR} \big(\pmb{e}^{(y)}, (\sigma \circ f)(\pmb{x}) \big) 
	= \left( \arccos \sqrt{p_y}\right)^2.
\end{equation}

We advocate that, conceptually, the Fisher--Rao loss is a natural choice of loss function, since it emerges from the Fisher metric in the statistical manifold of discrete distributions, cf. Section~\ref{sec:inf-geom}. Furthermore, using the Taylor series expansion, it can be verified that the $q$-CE loss (which includes the CE and MAE losses) has an asymptotic approximation in terms of the Fisher--Rao distance, for small values of the distance to $\pmb{e}^{(y)}$:
\begin{equation*}
	L_\qCE \left( y,f(\pmb{x}) \right)
	= \frac{1}{4}d_\FR^2 \big( \pmb{e}^{(y)}, \pmb{p} \big)
	+ O \left( d_\FR^4(\pmb{e}^{(y)}, \pmb{p}) \right).
\end{equation*}

Finally, in view of the arc-chord approximation~\eqref{eq:H-FR-approx}, we can also consider a loss function given by the square of the Hellinger distance, as mentioned in \cite[§~3.10]{calin2020}. Using~\eqref{eq:hellinger-dist}, we thus define the \emph{Hellinger loss} as
\begin{equation} \label{eq:h-loss}
	L_{\Hell} (y,f(\pmb{x}))
	\coloneqq d^2_{\Hell} \big( \pmb{e}^{(y)}, (\sigma \circ f)(\pmb{x}) \big)
	= \sum_{i=1}^{K} {\left( \sqrt{\pi_i (\pmb{e}^{(y)})} - \sqrt{p_i} \right)}^2
	= 2 \, (1- \sqrt{p_y}).
\end{equation}
Note that this is also a particular case of the $q$-cross entropy loss, for $q=1/2$, and that it also provides an approximation for the Fisher--Rao loss.

The following proposition establishes asymptotic relations between the Fisher--Rao and other loss functions.

\begin{proposition} \label{prop:losses}
	Let $L_\MAE$, $L_\CE$, $L_\qCE$, $L_\FR$ and $L_\Hell$ denote the loss functions defined, respectively, in \eqref{eq:mae-loss}, \eqref{eq:ce-loss}, \eqref{eq:q-loss}, \eqref{eq:fr-loss} and \eqref{eq:h-loss}.
	\begin{enumerate}
		\item \label{item:i} We have the asymptotic approximation:
		\begin{equation*}
			L_{\FR} (y,f(\pmb{x})) = L_{\qCE}(y,f(\pmb{x})) + O \big( L_{\qCE}^{2}(y,f(\pmb{x})) \big).
		\end{equation*}
		In particular, respectively for $q=0$, $q=1$ and $q=1/2$, we have:
		\begin{align*}
			L_{\FR}(y,f(\pmb{x})) &= L_{\MAE}(y,f(\pmb{x})) + O \big( L_{\MAE}^{2}(y,f(\pmb{x})) \big),\\
			L_{\FR}(y,f(\pmb{x})) &= L_{\CE}(y,f(\pmb{x})) + O \big( L_{\CE}^{2}(y,f(\pmb{x})) \big),\\
			L_{\FR}(y,f(\pmb{x})) &= L_{\Hell}(y,f(\pmb{x})) + O \big( L_{\Hell}^{2}(y,f(\pmb{x})) \big).
		\end{align*}
		
		\item \label{item:ii}	Moreover, the following inequality chain holds:
		\begin{equation*}
			L_{\MAE}(y,f(\pmb{x})) \le L_{\Hell}(y,f(\pmb{x})) \le L_{\FR}(y,f(\pmb{x})) \le L_{\CE}(y,f(\pmb{x})).
		\end{equation*}
	\end{enumerate}
\end{proposition}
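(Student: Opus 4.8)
The plan is to exploit the fact that, after substituting the one-hot target $\pmb{e}^{(y)}$, all five losses depend only on the single scalar $p \coloneqq p_y = \pi_y(\pmb{p}) \in (0,1]$: namely $L_\MAE = 1-p$, $L_\Hell = 2(1-\sqrt{p})$, $L_\FR = (\arccos\sqrt{p})^2$, $L_\CE = -\log p$, and $L_\qCE = (1 - p^{1-q})/(1-q)$ for $q \neq 1$. Both claims therefore reduce to elementary one-variable statements on $(0,1]$. It is convenient to introduce either $u \coloneqq 1-p \to 0^+$ (for the asymptotic part) or the angle $\alpha \coloneqq \arccos\sqrt{p} \in [0,\pi/2)$, so that $\sqrt{p}=\cos\alpha$, $\sin\alpha = \sqrt{u}$, and $L_\FR = \alpha^2$ (for the inequality chain).

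For item~\ref{item:i}, I would expand each loss as a power series in $u$ about $u=0$. A direct computation gives $L_\qCE = u + \tfrac{q}{2}u^2 + O(u^3)$, using $(1-u)^{1-q} = 1 - (1-q)u - \tfrac{q(1-q)}{2}u^2 + \cdots$; and, writing $\alpha = \arcsin\sqrt{u} = \sqrt{u}\,(1 + \tfrac{u}{6} + \cdots)$, that $L_\FR = \alpha^2 = u + \tfrac{1}{3}u^2 + O(u^3)$. Hence both share the leading term $u$, so $L_\FR - L_\qCE = (\tfrac13 - \tfrac{q}{2})u^2 + O(u^3) = O(u^2)$. Since $L_\qCE = u + O(u^2)$ inverts to $u = L_\qCE + O(L_\qCE^2)$, we get $u^2 = O(L_\qCE^2)$ and therefore $L_\FR = L_\qCE + O(L_\qCE^2)$. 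The three displayed special cases then follow by setting $q=0$, $q=1$, and $q=1/2$, for which $L_\qCE$ equals $L_\MAE$, $L_\CE$, and $L_\Hell$, respectively. Alternatively, the relation $L_\qCE = L_\FR + O(L_\FR^2)$ stated just before the proposition, together with $L_\FR = \tfrac14 d_\FR^2$, yields the same conclusion after inversion.

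For item~\ref{item:ii}, I would prove the three inequalities separately for all $p \in (0,1]$. The first, $1-p \le 2(1-\sqrt{p})$, becomes $-(1-\sqrt{p})^2 \le 0$ after writing $t=\sqrt{p}$ and factoring $1-t^2 - 2(1-t) = -(1-t)^2$, hence is immediate. The second, $2(1-\sqrt{p}) \le (\arccos\sqrt{p})^2$, becomes $2(1-\cos\alpha) \le \alpha^2$ under $\sqrt{p}=\cos\alpha$, which is the standard bound $1-\cos\alpha \le \alpha^2/2$. The third, $(\arccos\sqrt{p})^2 \le -\log p$, becomes $\alpha^2 \le -2\log\cos\alpha$, i.e.\ $\cos\alpha \le e^{-\alpha^2/2}$; I would prove this by showing that $g(\alpha) \coloneqq -\log\cos\alpha - \alpha^2/2$ satisfies $g(0)=0$ and $g'(\alpha) = \tan\alpha - \alpha \ge 0$ on $[0,\pi/2)$, so $g \ge 0$ throughout.

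I expect the third inequality to be the only genuine obstacle. Unlike the first two, it does not collapse to a one-line algebraic identity or a textbook trigonometric bound; it instead requires the monotonicity argument via $\tan\alpha \ge \alpha$ (equivalently, a comparison of the Taylor coefficients of $\cos\alpha$ and $e^{-\alpha^2/2}$, whose first discrepancy is the positive term $\tfrac{1}{12}\alpha^4$). Everything else---the series bookkeeping in item~\ref{item:i} and the first two inequalities---is routine once the common variable $p$ and the angle substitution $\sqrt{p}=\cos\alpha$ are in place.
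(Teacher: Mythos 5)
Your proposal is correct and follows essentially the same route as the paper: reduce every loss to a function of the single scalar $p_y$, obtain item~\ref{item:i} by Taylor expansion about $p_y=1$, and verify the three inequalities of item~\ref{item:ii} by elementary one-variable calculus (the paper proves $L_\MAE\le L_\Hell$ by the same factorisation, gets $L_\Hell\le L_\FR$ from the chord-versus-arc comparison on the sphere, and gets $L_\FR\le L_\CE$ from a Lagrange-remainder computation equivalent to your $\tan\alpha\ge\alpha$ argument). Your uniform substitution $\sqrt{p_y}=\cos\alpha$ is a slightly cleaner way to organise the second and third inequalities, but it is not a different method.
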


\begin{proof}
	In the following, we omit the argument of the losses, for notational simplicity.
	We shall first prove item~\ref{item:i} for $q \in \left[0,1\right)$. Isolating $p_y$ in \eqref{eq:q-loss} and substituting it in~\eqref{eq:fr-loss}, we obtain
	\begin{equation*}
		L_\FR = \left[ \arccos \left( \left(1-(1-q)L_{\qCE} \right)^{\frac{1}{2(1-q)}} \right) \right]^2.
	\end{equation*}
	The first-order Taylor series expansion of $g_1(x) \coloneqq \big[ \arccos \big( \left(1-(1-q) x \right)^{\frac{1}{2(1-q)}} \big) \big]^2$ around $x=0$ is $g_1(x) = x + O(x^2)$, yielding the desired result.
	Now, we prove it for $q=1$. Again, isolating $p_y$ in \eqref{eq:q-loss} and substituting it in~\eqref{eq:fr-loss}, we obtain
	\begin{equation*}
		L_\FR = \left( \arccos \sqrt{e^{-L_\CE}} \right)^2.
	\end{equation*}
	Analogously, the first-order Taylor series of $g_2(x) \coloneqq ( \arccos \sqrt{e^{-x}} )^2$ around $x=0$ is $g_2(x) = x + O(x^2)$, completing the proof of the first item.
	
	For the first inequality of item~\ref{item:ii}, note that $L_{\MAE} = (1+\sqrt{p_y})(1-\sqrt{p_y}) \le 2(1-\sqrt{p_y}) = L_{\Hell}$, since $p_y \le 1$. The second inequality is immediate from~\eqref{eq:H-2sin-FR}, i.e., the fact that the chordal distance is less than the geodesic distance on the sphere. For the third inequality, note that the first-order Taylor expansion of $g_2(x)$ around zero can be written as $g_2(x)=x+R_1$, with
	\begin{equation*}
		R_1 = \frac{x^2}{2}g''(x^{*}) = \frac{x^2}{4} \left( \frac{\sec^2 \sqrt{x^*}}{x^*} - \frac{\tan \sqrt{x^*}}{(x^*)^{3/2}} \right) \ge 0,
	\end{equation*}
	for $x^{*} \in (0,x) \subseteq (0,{\pi^2}/4)$. Therefore $g_2(x) \ge x$, and hence $L_{\CE} \ge L_{\FR}$.\qedhere
\end{proof}

These results state that the $q$-CE loss---and, in particular, MAE, CE and Hellinger loss functions---can be seen as first-order asymptotic approximations for the Fisher--Rao loss. However, while the approximations are valid for small values of the losses, the functions may behave largely differently for higher values.

\subsection{Robustness to Label Noise}\label{sec:robustness}

Let us now consider the problem of learning in the presence of label noise, cf.~\cite{ghosh2015,ghosh2017}. In this case, the classifier does not have access to a set of \emph{clean} samples $\left\{ (\pmb{x}_i, y_i) \right\}_{i=1}^{N}$, but instead to a set of \emph{noisy} data $\left\{ (\pmb{x}_i, \tilde{y}_i) \right\}_{i=1}^{N}$, where $\tilde{y}_i$ denotes the noisy labels. We suppose the noisy data follow a joint distribution $(\pmb{x}, \tilde{y}) \sim \Dcal_\eta$.

Here, we focus on the case of \emph{uniform} or \emph{symmetric} label noise, i.e., when the noisy label $\tilde{y}_i$ does not depend on the feature vector $\pmb{x}_i$, nor on the true label $y_i$. In this case, the label noise is modelled by
\begin{equation}
	\Pr(\tilde{y}_i = j \vert y_i = k) =
	\begin{cases}
		1 - \eta, & j=k\\
		\frac{\eta}{K-1}, &j \neq k
	\end{cases},
\end{equation}
for some constant $\eta \in \left[ 0,1 \right]$, called \emph{noise rate}.

We denote $R_L(f) \coloneqq \Exp_{\Dcal}\left[ L({y}, f(\pmb{x})) \right]$ and $R^{\eta}_L(f) \coloneqq \Exp_{\Dcal_\eta}\left[ L(\tilde{y}, f(\pmb{x})) \right]$ the risks with respect to clean and noisy data, respectively. Moreover, let $f^*$ and $\hat{f}$ be global minimisers of $R_L(f)$ and $R^\eta_L(f)$, respectively. The risk minimisation under loss function $L$ is said to be \emph{noise tolerant}~\cite{manwani2013,ghosh2015} if $\hat{f}$ has the same probability of misclassification as that of $f^*$. In other words, the performance of the classifier trained with the noisy dataset is as good as it would be had it been trained with a clean dataset.

An important contribution from~\cite[Theorem~1]{ghosh2017} was to provide a sufficient condition for a loss to be robust under label noise: a loss function $L$ is tolerant under uniform label noise with $\eta < \frac{K-1}{K}$, if it satisfies $\sum_{i=1}^K L(i,f(\pmb{x})) = C$, for all $\pmb{x} \in \Xcal$ and all classifiers $f$. It is shown that the MAE loss satisfies this condition, while the MSE and CE losses do not.
However, even for loss functions not satisfying it, if the quantity $\sum_{i=1}^K L(i,f(\pmb{x}))$ is bounded, it is still possible to obtain performance guarantees for robustness to label noise. This has been done for the $q$-cross entropy loss in~\cite[Theorem~1]{zhang2018}. Inspired by that, we derive a similar result for the Fisher--Rao loss function.

\begin{lemma}\label{lem:fr-loss-sum-bound}
	The Fisher--Rao loss $L_{\FR}$ satisfies
	\begin{equation}\label{eq:fr-loss-sum-bound}
		K \left( \arccos \frac{1}{\sqrt{K}} \right)^2
		\leq
		\sum_{i=1}^K L_{\FR}(i,f(\pmb{x}))
		\leq
		\frac{\pi^2}{4}(K-1).
	\end{equation}
\end{lemma}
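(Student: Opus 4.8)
The plan is to reduce the entire statement to a single convexity fact. Writing $\pmb{p} = (\sigma\circ f)(\pmb{x}) \in \Delta^{K-1}$ and invoking the closed form \eqref{eq:fr-loss}, the quantity to bound is $\sum_{i=1}^K L_{\FR}(i,f(\pmb{x})) = \sum_{i=1}^K \phi(p_i)$, where I set $\phi(t) \coloneqq (\arccos\sqrt{t})^2$ on $[0,1]$. The feasible set for $\pmb{p}$ is exactly the simplex $\Delta^{K-1}$, so I am minimising and maximising a separable sum $\sum_i \phi(p_i)$ subject to a single linear constraint. My first and central step is to show that $\phi$ is convex on $[0,1]$; granting this, both inequalities fall out from elementary properties of convex functions over a polytope.

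To prove convexity I would differentiate with the substitution $u = \arccos\sqrt{t}$, i.e. $t = \cos^2 u$ with $u \in (0,\pi/2)$. A direct computation gives $\phi'(t) = -\,\arccos\!\sqrt{t}\,/\sqrt{t(1-t)} = -\,2u/\sin 2u$, and differentiating once more (using $\diff t/\diff u = -\sin 2u$) yields
\begin{equation*}
	\phi''(t) = \frac{2\bigl(\sin 2u - 2u\cos 2u\bigr)}{\sin^3 2u}.
\end{equation*}
Since $\sin 2u > 0$ for $2u \in (0,\pi)$, the sign of $\phi''$ is that of the numerator. Setting $v = 2u \in (0,\pi)$ and $h(v) \coloneqq \sin v - v\cos v$, I would note $h(0)=0$ and $h'(v) = v\sin v > 0$ on $(0,\pi)$, so $h > 0$ there; hence $\phi'' > 0$ on $(0,1)$, and $\phi$ is (strictly) convex on $[0,1]$ by continuity at the endpoints. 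This is the step I expect to be the main obstacle: naively differentiating $\phi$ twice in $t$ produces an unwieldy expression whose sign is hard to read, and the whole argument hinges on the trigonometric substitution collapsing it to the single clean inequality $\sin v \ge v\cos v$.

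With convexity in hand, the lower bound is immediate from Jensen's inequality applied at the barycentre: $\sum_{i=1}^K \phi(p_i) \ge K\,\phi\!\bigl(\tfrac1K\sum_{i=1}^K p_i\bigr) = K\,\phi(1/K) = K\bigl(\arccos\tfrac{1}{\sqrt{K}}\bigr)^2$, with equality at the uniform distribution $p_i = 1/K$. For the upper bound I would use that a convex function on the compact convex polytope $\Delta^{K-1}$ attains its maximum at an extreme point, and the extreme points are precisely the one-hot vectors $\pmb{e}^{(j)}$. At such a vertex one coordinate equals $1$, contributing $\phi(1) = (\arccos 1)^2 = 0$, while the remaining $K-1$ coordinates equal $0$, each contributing $\phi(0) = (\arccos 0)^2 = \pi^2/4$; the sum is therefore $\tfrac{\pi^2}{4}(K-1)$, matching the claimed bound. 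Both bounds are thus seen to be tight, attained at the barycentre and at the vertices of the simplex respectively.
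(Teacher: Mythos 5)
Your proof is correct, and it takes a genuinely different route from the paper's. The paper finds the extrema of $F(\pmb{p})=\sum_{j}(\arccos\sqrt{p_j})^2$ on the simplex by Lagrange multipliers: it characterises the interior critical point as the uniform distribution, then handles the boundary by induction over the faces of $\Delta^{K-1}$, obtaining the family of critical values $(K-j)\frac{\pi^2}{4}+j\left(\arccos\frac{1}{\sqrt{j}}\right)^2$ for $1\le j\le K$ and comparing them. You instead establish strict convexity of $\phi(t)=(\arccos\sqrt{t})^2$ once and for all via the substitution $t=\cos^2 u$, which collapses $\phi''>0$ to the elementary inequality $\sin v>v\cos v$ on $(0,\pi)$; the lower bound is then Jensen at the barycentre and the upper bound is the maximum-at-a-vertex property of convex functions on polytopes. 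Your computations of $\phi'$ and $\phi''$ check out, and the identification of the vertex value $\phi(1)+ (K-1)\phi(0)=\frac{\pi^2}{4}(K-1)$ is right. What your route buys: it avoids the boundary induction and the comparison of $K$ critical values entirely, and it also supplies a proof of the one fact the paper asserts without justification, namely that $x\mapsto -\frac{\arccos\sqrt{x}}{\sqrt{x(1-x)}}$ is injective --- this is exactly the statement that $\phi'$ is strictly increasing, i.e.\ $\phi''>0$. What the paper's route buys is a complete list of critical values of $F$, not only the extreme ones. One cosmetic remark: since $\pmb{p}$ is a softmax output it lies in the open simplex, so the upper bound is a supremum approached only in the limit; this does not affect the stated inequality, which you correctly prove over the closed simplex.
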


\begin{proof}
	We shall use Lagrange multipliers to find the critical points of $F(\pmb{p}) \coloneqq \sum_{j=1}^K (\arccos \sqrt{p_j})^2$ subject to $G(\pmb{p}) \coloneqq \sum_{j=1}^K p_j - 1 = 0$. We have $\frac{\partial F}{\partial p_i} (\pmb{p}) = - \frac{\arccos{\sqrt{p_i}}}{ \sqrt{ p_i (1-p_i) } }$ and $\frac{\partial G}{\partial p_i} (\pmb{p}) = 1$, therefore critical points in the interior of the simplex $\Delta^{K-1}$ are of the form $\frac{\partial F}{\partial p_i} (\pmb{p}) = \lambda$, for $1 \le i \le K$. Since $x \mapsto -\frac{\arccos\sqrt{x}}{\sqrt{x(1-x)}}$ is injective, this happens if, and only if, $p_1 = p_2 = \cdots = p_K = \frac{1}{K}$, which leads to $F(\pmb{p}) = K \left( \arccos \frac{1}{\sqrt{K}} \right)^2$.
	On the other hand, if there are critical points on the boundary of $\Delta^{K-1}$, they will have a zero in some position, e.g., $\pmb{p}=(0,p_2,\dots,p_K)$. Repeating the same argument, the critical point will have the form $p_1 = 0$, $p_2 = \cdots = p_K= \frac{1}{K-1}$, so that $F(\pmb{p}) = \frac{\pi^2}{4} + (K-1) \left( \arccos \frac{1}{\sqrt{K-1}} \right)^2$. By induction, any critical value of $F$ has the form $F(\pmb{p}) = (K-j) \frac{\pi^2}{4} + j \left(\arccos\frac{1}{\sqrt{j}}\right)^2$, for $1 \le j \le K$. By comparing the values of $F$ in all these points, we find maximum and minimum, respectively, for $j=1$ and $j=K$, which yields the desired inequalities. \qedhere
\end{proof}

\begin{proposition} \label{prop:bounds-fr}
	For the Fisher--Rao loss function~$L_\FR$, under uniform label noise with $\eta < \frac{K-1}{K}$, we have
	\begin{equation}\label{eq:A-ineq}
		0 \le R^\eta_{L_\FR}(f^*) - R^\eta_{L_\FR}(\hat f) \le A_\FR
	\end{equation}
	and
	\begin{equation}\label{eq:A'-ineq}
		B_\FR \le R_{L_\FR}(f^*) - R_{L_\FR}(\hat f) \le 0,
	\end{equation}
	where
	\begin{equation}
		A_\FR \coloneqq A_\FR(K,\eta) \coloneqq \eta \left( \frac{\pi^2}{4} - \frac{K}{K-1} \left(\arccos\frac{1}{\sqrt{K}}\right)^2 \right)
	\end{equation}
	and
	\begin{eqnarray}
		B_\FR \coloneqq B_\FR(K,\eta) \coloneqq \eta\frac{ K (\arccos \frac{1}{\sqrt{K}} )^2 - \frac{\pi^2}{4} (K-1) }{K -1 -\eta K}.
	\end{eqnarray}
\end{proposition}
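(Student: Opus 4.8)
The plan is to reduce both inequality chains to a single algebraic identity relating the noisy risk $R^\eta_{L_\FR}$ to the clean risk $R_{L_\FR}$ under symmetric noise, and then to insert the sum bounds from Lemma~\ref{lem:fr-loss-sum-bound}. Writing $C_f(\pmb{x}) \coloneqq \sum_{i=1}^K L_\FR(i,f(\pmb{x}))$ and conditioning on the clean label, the symmetric noise model yields, after taking the expectation over $\Dcal$,
\[
	R^\eta_{L_\FR}(f) = \alpha\, R_{L_\FR}(f) + \frac{\eta}{K-1}\,\Exp_\Dcal\big[C_f(\pmb{x})\big], \qquad \alpha \coloneqq \frac{K-1-\eta K}{K-1}.
\]
The hypothesis $\eta < \frac{K-1}{K}$ guarantees $\alpha > 0$, which is what makes the whole argument work.

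For \eqref{eq:A-ineq}, the left inequality is immediate from the minimality of $\hat f$ for $R^\eta_{L_\FR}$. For the upper bound I would subtract the identity at $\hat f$ from the identity at $f^*$: the clean-risk contribution $\alpha\big(R_{L_\FR}(f^*) - R_{L_\FR}(\hat f)\big)$ is non-positive since $f^*$ minimises $R_{L_\FR}$ and $\alpha>0$, leaving $\frac{\eta}{K-1}\Exp_\Dcal[C_{f^*}(\pmb{x}) - C_{\hat f}(\pmb{x})]$. Bounding $C_{f^*}$ above by $\frac{\pi^2}{4}(K-1)$ and $C_{\hat f}$ below by $K\big(\arccos\frac{1}{\sqrt K}\big)^2$ via Lemma~\ref{lem:fr-loss-sum-bound} produces exactly $A_\FR$.

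For \eqref{eq:A'-ineq} I would invert the identity to write
\[
	R_{L_\FR}(f^*) - R_{L_\FR}(\hat f) = \frac{1}{\alpha}\Big[\big(R^\eta_{L_\FR}(f^*) - R^\eta_{L_\FR}(\hat f)\big) - \frac{\eta}{K-1}\Exp_\Dcal\big[C_{f^*}(\pmb{x}) - C_{\hat f}(\pmb{x})\big]\Big].
\]
The right inequality holds because $f^*$ minimises $R_{L_\FR}$. For the lower bound, the noisy-risk gap is non-negative by minimality of $\hat f$, and the sum-difference term is again controlled by the same extreme values from Lemma~\ref{lem:fr-loss-sum-bound}, now with the roles of maximum and minimum exchanged; the positive factor $1/\alpha = (K-1)/(K-1-\eta K)$ is precisely what makes the denominator $K-1-\eta K$ appear in $B_\FR$.

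The computation is routine once the decomposition is established; the only delicate point is sign bookkeeping — ensuring that each appeal to a minimality property and each substitution of the extreme sum values $\frac{\pi^2}{4}(K-1)$ or $K\big(\arccos\frac{1}{\sqrt K}\big)^2$ moves the estimate in the intended direction, and that multiplying by the positive factor $\alpha$ (or $1/\alpha$) never reverses an inequality.
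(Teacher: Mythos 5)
Your proposal is correct and follows essentially the same route as the paper: the same affine decomposition of the noisy risk in terms of the clean risk and $\Exp_\Dcal[C_f(\pmb{x})]$, the insertion of the extreme values from Lemma~\ref{lem:fr-loss-sum-bound}, and the same subtraction-plus-minimality argument, with $B_\FR = -A_\FR\,(K-1)/(K-1-\eta K)$ emerging exactly as you indicate. One phrase is slightly off --- for the lower bound the lemma is applied with the \emph{same} assignment as in the first part ($C_{f^*}$ bounded above by $\frac{\pi^2}{4}(K-1)$ and $C_{\hat f}$ below by $K(\arccos\frac{1}{\sqrt{K}})^2$), since the minus sign in your bracketed identity already reverses the direction --- but this does not affect the validity of the argument.
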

\begin{proof}
	The inequalities involving $0$ in \eqref{eq:A-ineq} and \eqref{eq:A'-ineq} are straightforward consequences of the definitions of $f^*$ and $\hat f$. We shall prove the inequalities involving $A_\FR$ and $B_\FR$. For \eqref{eq:A-ineq}, as in the proof of \cite[Theorem~1]{zhang2018}, we have that
	\begin{align*}
		R^\eta_{L_\FR}(f)
		&= \Exp_{(\pmb{x}, \tilde y)} \left[ L_\FR(\tilde y, f(\pmb{x})) \right]
		= \Exp_{\pmb{x}} \Exp_{y\rvert \pmb{x}} \Exp_{\tilde y \rvert \pmb{x},y} \left[ L_\FR(\tilde y, f(\pmb{x})) \right]\\
		&= \Exp_{\pmb{x}} \Exp_{y\rvert \pmb{x}} \left[ (1-\eta) L_\FR (y,f(\pmb{x})) + \frac{\eta}{K-1} \sum_{i \neq y} L_\FR(i, f(\pmb{x})) \right]\\
		&=  \left( 1 - \frac{\eta K}{K-1} \right) R_{L_\FR} (f)
		+ \frac{\eta}{K-1} \Exp_{\pmb{x}} \Exp_{y \rvert \pmb{x}} \left[ \sum_{i=1}^K L_{\FR} (i, f(\pmb{x})) \right].
	\end{align*}
	
	Using Lemma \ref{lem:fr-loss-sum-bound}, we obtain
	\begin{equation}\label{eq:RLFR1}
		\del{ 1 - \frac{\eta K}{K-1} } R_{L_\FR} (f^*)
		+ \frac{\eta K}{K-1} \left( \arccos\frac{1}{\sqrt{K}} \right)^2
		\le R_{L_\FR}^\eta (f^*),
	\end{equation}
	and
	\begin{equation}\label{eq:RLFR2}
		R_{L_\FR}^\eta (\hat f) \le
		\del{ 1 - \frac{\eta K}{K-1} } R_{L_\FR} (\hat f)
		+ \frac{\eta \pi^2}{4}.
	\end{equation}
	Subtracting these two inequalities, we get
	\begin{equation*}
		R^\eta_{L_\FR} (f^*) - R^\eta_{L_\FR}(\hat f) \le
		\left( 1-\frac{\eta K}{K-1} \right)
		\left( R_{L_\FR} (f^*) - R_{L_\FR}(\hat f) \right)
		+
		A_\FR.
	\end{equation*}
	Using that $R_{L_\FR}(f^*) \le R_{L_\FR}(\hat f)$ and $\eta < \frac{K-1}{K}$ yields $R^\eta_{L_\FR} (f^*) - R^\eta_{L_\FR}(\hat f) \le A_\FR$.
	
	For \eqref{eq:A'-ineq}, we use the same technique, but isolating $R_{L_\FR}(f)$ in \eqref{eq:RLFR1} and \eqref{eq:RLFR2}, with $f^*$ and with $\hat f$ swapping places, thus obtaining
	\begin{equation*}
		R_{L_\FR} (\hat f) \le
		\frac{
			R^\eta_{L_\FR}(\hat f) - \frac{\eta K}{K-1} \left( \arccos\frac{1}{\sqrt{K}} \right)^2
		}{
			1 - \frac{\eta K}{K-1},
		}
	\end{equation*}
	and
	\begin{equation*}
		\frac{
			R^\eta_{L_\FR}(f^*) - \frac{\eta \pi^2}{4}
		}{
			1 - \frac{\eta K}{K-1}
		}
		\le R_{L_\FR} (f^*).
	\end{equation*}
	When the two inequalities are subtracted, we get
	\begin{equation*}
		R_{L_\FR}(f^*) - R_{L_\FR}(\hat{f}) \ge \frac{(K-1) \left( R_{L_\FR}^\eta(f^*) - R_{L_\FR}^\eta(\hat{f}) \right)}{K-1-\eta K} + B_\FR.
	\end{equation*}
	Using that $R^\eta_{L_\FR}(\hat f) \le R^\eta_{L_\FR}(f^*)$ and $\eta < \frac{K-1}{K}$ yields $B_\FR \le R_{L_\FR}(f^*) - R_{L_\FR}(\hat f)$.
\end{proof}

This result provides an upper bound for the performance degradation of a classifier trained with the Fisher--Rao loss under uniform label noise, in terms of the noise rate~$\eta$ and the number of classes~$K$. This shows that this type of noise can only have a limited impact in the performance of the trained classifier.

Table~\ref{tab:bounds-noise} compares the analogous bounds for other loss functions. The results for the MSE loss are derived in Appendix~\ref{ap:MSE-loss}. For the MAE loss, the bounds are zero, since it is proved to be robust to uniform label noise~\cite{ghosh2017}. For the CE loss, no such bounds can be obtained, as this loss in unbounded, so its entries are marked as $+\infty$ and $-\infty$ on the table. The table also contains the result for the $q$-CE losses, which was taken from \cite{zhang2018}, and in particular for the Hellinger loss ($q=1/2$).

\begin{table}
	\centering
	\caption{Bounds $A(K,\eta)$ and $B(K,\eta)$ for different loss functions.}
	\label{tab:bounds-noise}
	\renewcommand{\arraystretch}{2}
	\begin{tabular}{cccc}
		\hline
		Loss function & $A(K,\eta)$ & $B(K,\eta)$ \\ \hline \hline
		Mean squared error (MSE)  & $\eta$ & $-\eta\frac{K-1}{K-1-\eta K}$ \\ 
		Mean absolute error (MAE) & $0$ & $0$ \\ 
		Cross entropy (CE)        & $+\infty$ & $-\infty$ \\ 
		$q$-cross entropy ($q$-CE)        & $\eta \frac{K^q - 1}{(1-q)(K-1)}$ & $\eta \frac{1-K^q}{(1-q)(K-1-\eta K)}$ \\ 
		Fisher--Rao                & $\eta \left( \frac{\pi^2}{4} - \frac{K}{K-1} \left(\arccos\frac{1}{\sqrt{K}}\right)^2 \right)$ & $\eta \frac{K \left( \arccos\frac{1}{\sqrt{K}} \right)^2 - \frac{\pi^2}{4}\left(K-1\right)}{K-1-\eta K}$ \\ 
		Hellinger ($q=1/2$)       & $\eta \frac{2(\sqrt{K} - 1)}{K-1}$ & $\eta \frac{2(1-\sqrt{K})}{(K-1-\eta K)}$ \\ \hline
	\end{tabular}
\end{table}

To further investigate the behaviour of the bounds that depend on the noise rate~$\eta$ and the number of classes~$K$, we write the noise rate as $\eta = \alpha \left( 1 - \frac{1}{K} \right)$, parametrised by $\alpha \in \left[0,1\right)$. Varying this value, the noise rate goes from no label noise to the maximum allowed for a given number of classes~$K$.
Figure~\ref{fig:bounds-alpha} illustrates how the bounds $A(K,\eta)$ and $B(K,\eta)$ vary with the value $\alpha$ for different loss functions and $K=10$. For the $q$-CE loss, we choose the value $q=0.7$, which has been used in the experiments in~\cite{zhang2018}. We see that, as $\alpha \to 1$, the values of $A(K,\eta)$ tend each to a constant, while those of $B(K,\eta)$ go to~$-\infty$. 

\begin{figure}
	\centering
	\includegraphics[width=\linewidth]{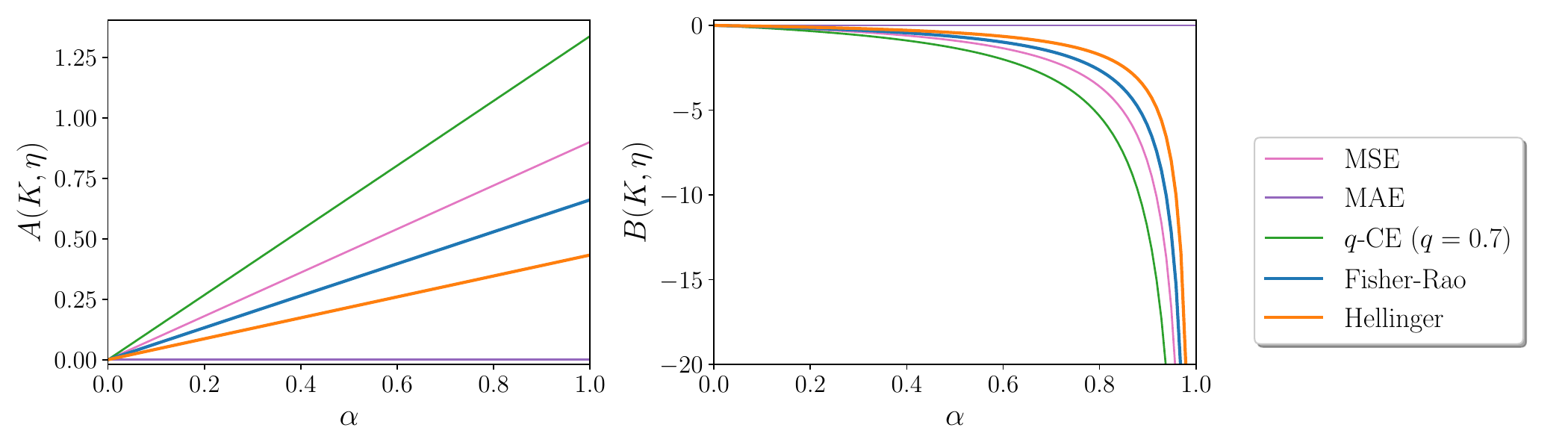}
	\caption{Bounds $A(K,\eta)$ and $B(K,\eta)$, for $K=10$ and $\eta = \alpha \left(1-\frac{1}{K}\right)$, as function of $\alpha \in \left[0,1\right)$.}
	\label{fig:bounds-alpha}
\end{figure}

To study the dependence on the number of classes~$K$, we plot the bounds versus~$K$, for $\eta = 0.8 \left(1-\frac{1}{K}\right)$, in Figure~\ref{fig:bounds-k}. We see that, for Fisher--Rao, Hellinger and $q$-cross entropy losses, after a peak, the bounds get tighter as the number of classes increases. Indeed, for the Fisher--Rao bounds, we have
\begin{equation*}
	\lim_{K\to\infty} A_\FR\left(K, \alpha\left( 1-\frac{1}{K} \right)\right)=0
	\quad\text{and}\quad
	\lim_{K\to\infty} B_\FR\left(K, \alpha\left( 1-\frac{1}{K} \right)\right)=0.		
\end{equation*}
This means that the effect of the label noise gets `diluted' as the number of classes grows, for any $\alpha \in \left[0,1\right)$. Even though this effect may sound intuitive, this is not what happens with the bounds for the MSE loss, which tend to a constant value as $K$ increases.

\begin{figure}
	\centering
	\includegraphics[width=\linewidth]{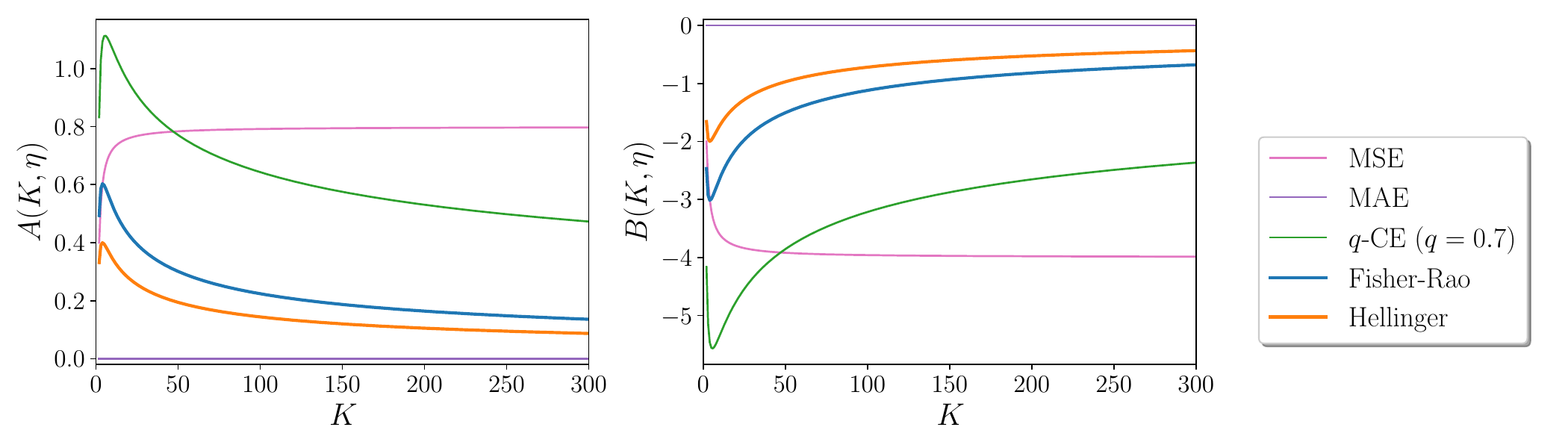}
	\caption{Bounds $A(K,\eta)$ and $B(K,\eta)$ for $\eta = 0.8 \left(1-\frac{1}{K}\right)$, as function of ${2 \le K \le 300}$.}
	\label{fig:bounds-k}
\end{figure}

\subsection{Learning Speed}

Learning in neural networks involves minimising the empirical risk~$\bar{R}_L = \frac{1}{N}\sum_{i=1}^{N} L(y_i,f(\pmb{x}_i))$ by means of a numerical algorithm, such as gradient descent. The gradient vector $\nabla_\omega \bar{R}_L = \frac{1}{N} \sum_{i=1}^{N} \nabla_{\omega} L(y_i,f(\pmb{x}_i)) $ with respect to the network parameters $\omega$ gives the direction and magnitude of each updating step towards the local minimum of $\bar{R}_L$. To investigate the learning speed obtained with each loss, we shall study the magnitude of this gradient vector.

An interesting feature of MAE, CE, $q$-CE, Fisher--Rao and Hellinger losses is that they can all be written as to only depend on the probability $p_y = \sigma(s_y)$ that the model assigns to the correct class. The MSE loss, on the other hand, depends on all coordinates of the vector $\pmb{p} = (\sigma \circ f)(\pmb{x})$, meaning that it tries to simultaneously make the predicted probability $p_y$ of the correct class close to~$1$ and to minimise the probability $p_i$ of all other classes $i \neq y$.

Specifically, the MAE, CE, $q$-CE Fisher--Rao and Hellinger losses can be written as functions of the form
\begin{equation}\label{eq:Loss-function-form}
	L(y,f(\pmb{x}))
	= h \left( \pi_y \left( (\sigma\circ f)(\pmb{x}) \right) \right)
	= h \left( p_y \right),
\end{equation}
for some monotonically decreasing differentiable function $h \colon [0,1] \to \R$ with $h(1)=0$. In this case, we have
\begin{equation*}
	\nabla_\omega L(y,f(\pmb{x})) = h'(p_y) \nabla_\omega \pi_y \left( (\sigma \circ f)(\pmb{x}) \right).
\end{equation*}
Since the term $\nabla_\omega \pi_y \left( (\sigma \circ f)(\pmb{x}) \right)$ does not depend on the loss function, it is enough to study the absolute value of the multiplicative factor $\vert h'(p_y) \vert$ to compare the magnitude of the gradient for different losses. The particular functions $h$ and their derivatives are given in Table~\ref{tab:functions-h} and plotted in Figure~\ref{fig:losses}. Note that this plot recovers the inequality chain obtained in Proposition~\ref{prop:losses}, item~\ref{item:ii}.

As it has been previously noted~\cite{kumar2018,zhang2018}, the fact that the gradient of the MAE loss is constant is the main reason why this loss results in slow training, in spite of its robustness to label noise. On the other extreme, with the CE loss, the magnitude of $\vert h'(p_y) \vert$ increases as $p_y$ decreases, meaning that the farther away from the local minimum (i.e., prediction is more wrong), the larger the steps in the gradient descent method will be. Also, the form of its derivative implies that more emphasis is put on samples whose predicted class is different from the assigned label (noisy or clean), which explains both the fast convergence speed and the weak tolerance to label noise.

The magnitude of $\vert h'(p_y) \vert$ for the $q$-CE, Fisher--Rao and Hellinger loss functions similarly depend on the value of $p_y$, indicating they have better training dynamics than the MAE loss. Interestingly, the order of losses with higher magnitude of $\vert h'(p_y) \vert$, suggesting faster convergence, is the opposite of that observed for robustness (CE, Fisher--Rao, Hellinger, MAE) in Figure~\ref{fig:bounds-k}. Thus, the Fisher--Rao and the Hellinger loss can be seen as providing a trade-off between learning dynamics and robustness against label noise. Particularly, these results indicate that the Fisher--Rao loss can provide a modest improvement in learning speed, at the cost of a modest reduction of robustness, when compared to the Hellinger loss.

\begin{figure}
	\centering
	\includegraphics[width=\linewidth]{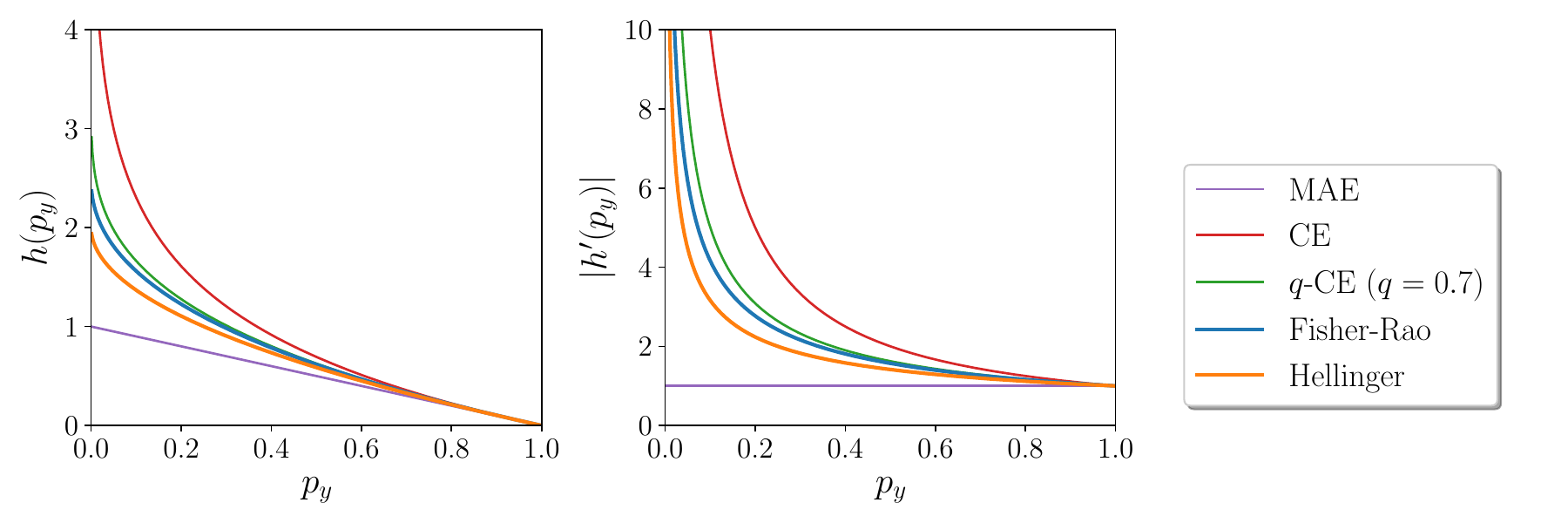}
	\caption{Functions $h(p_y)$ and their derivatives $\vert h'(p_y) \vert$ for different loss functions, cf.~Table~\ref{tab:functions-h}.}
	\label{fig:losses}
\end{figure}

\begin{table}
	\centering
	\caption{Functions $h(p_y)$ and their derivatives $\vert h'(p_y) \vert$.}
	\label{tab:functions-h}
	\renewcommand{\arraystretch}{1.4}
	\setlength{\tabcolsep}{10pt}
	\begin{tabular}{cccc}
		\hline
		Loss function              &  $h(p_y)$ & $\vert h'(p_y) \vert$	\\ \hline \hline
		Mean absolute error (MAE)  & $1-p_y$ & $1$ \\ 
		Cross entropy (CE)         & $- \log p_y$ & $\frac{1}{p_y}$	\\
		$q$-cross entropy ($q$-CE) & $- \log_q p_y$ & $\frac{1}{{(p_y)}^q}$ \\
		Fisher--Rao                 & $\left( \arccos\sqrt{p_y} \right)^2$ & $\frac{\arccos\sqrt{p_y}}{\sqrt{p_y(1-p_y)}}$	\\
		Hellinger ($q=1/2$)        & $2 \left( 1-\sqrt{p_y} \right)$ & $\frac{1}{\sqrt{p_y}}$ \\ \hline
	\end{tabular}
\end{table}

\section{Experimental Results} \label{sec:results}

We conduct numerical experiments to study the performance of different loss functions, on a synthetic and a real dataset. The experiments use simple neural networks and are intended to illustrate the theoretical results derived in the previous sections and showcase the trade-off provided by the Fisher--Rao loss, rather than to generate state-of-the-art results. We compare the Fisher--Rao loss function with CE and Hellinger loss, which approach the former, cf.~Prosposition~\ref{prop:losses}, and also with the MSE loss, which is commonly used.

We assess the performances both on the original (noiseless) datasets and in the presence of uniform label noise with noise rate~$\eta$. Only the training datasets are corrutpted with noise, while the testing datasets are clean. In all experiments the activation function is ReLU, and stochastic gradient descent is used for training the model. The learning rate for each loss was hand-tuned by grid search to provide the best performance.

\subsection{Synthetic Dataset}

First we consider a synthetic dataset formed by 8,000 training examples and 2,000 testing examples of $\text{100}$-dimensional vectors divided into 10~classes. The data are generated by Gaussian distributions centred on the vertices of hyper-cube, using the method \texttt{make\_classification} from scikit-learn~\cite{scikit-learn}. We set up a multilayer perceptron~(MLP) network with three hidden layers with 80, 40 and 20 neurons, respectively. Batch size is~20 and the model is trained for 20~epochs. The experiments were repeated five times, and we report the averages and standard deviations.

We run tests for noiseless data and with uniform label noise with $\eta=0.3$ and $\eta=0.5$. The evolution of the accuracy on the training (dashed lines) and testing (continuous lines) datasets are shown in Figure~\ref{fig:synthetic-results}, where the mean results are represented by opaque lines and the shaded regions indicate the standard deviation. The final test accuracies are summarised in Table~\ref{tab:synthetic-results}, where the two best accuracies are boldfaced.

\begin{table}[h]
	\begin{center}
		\begin{minipage}{\textwidth}
			\caption{Test accuracy (\%) for synthetic dataset.}
			\label{tab:synthetic-results}
			\centering
			\setlength{\tabcolsep}{4pt}
			\renewcommand{\arraystretch}{1.15}
			\begin{tabular}{@{}ccccc@{}}
				\hline
				Loss  & Noiseless & $\eta = 0.3$ & $\eta = 0.5$ \\
				\hline
				Mean square error    & 88.39 ($\pm$0.70) & 74.43 ($\pm$0.41) & 64.08 ($\pm$0.70) \\
				Cross entropy    & \textbf{90.21 ($\pm$1.27)} & 73.68 ($\pm$0.99) & 60.78 ($\pm$1.15) \\
				Fisher--Rao   & \textbf{89.64 ($\pm$0.80)} & \textbf{77.83 ($\pm$0.71)} & \textbf{67.38 ($\pm$0.46)} \\
				Hellinger    & 89.36 ($\pm$1.18) & \textbf{78.43 ($\pm$0.66)} & \textbf{68.49 ($\pm$1.07)}  \\
				\hline
			\end{tabular}
		\end{minipage}
	\end{center}
\end{table}

\begin{figure}[h]
	\centering
	\subfloat[$\eta=0$]{\includegraphics[width=0.3\linewidth]{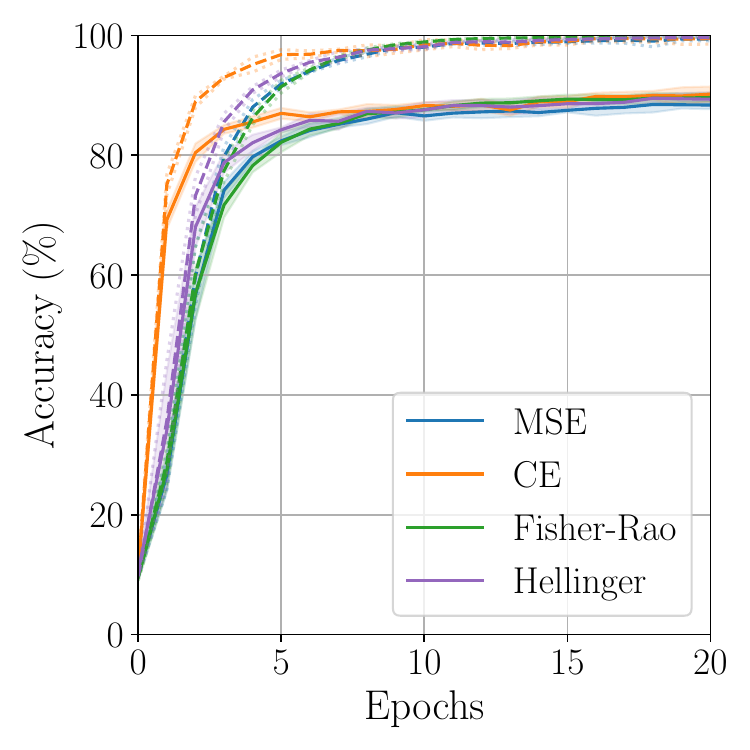}}
	\hspace{1em}
	\subfloat[$\eta=0.3$]{\includegraphics[width=0.3\linewidth]{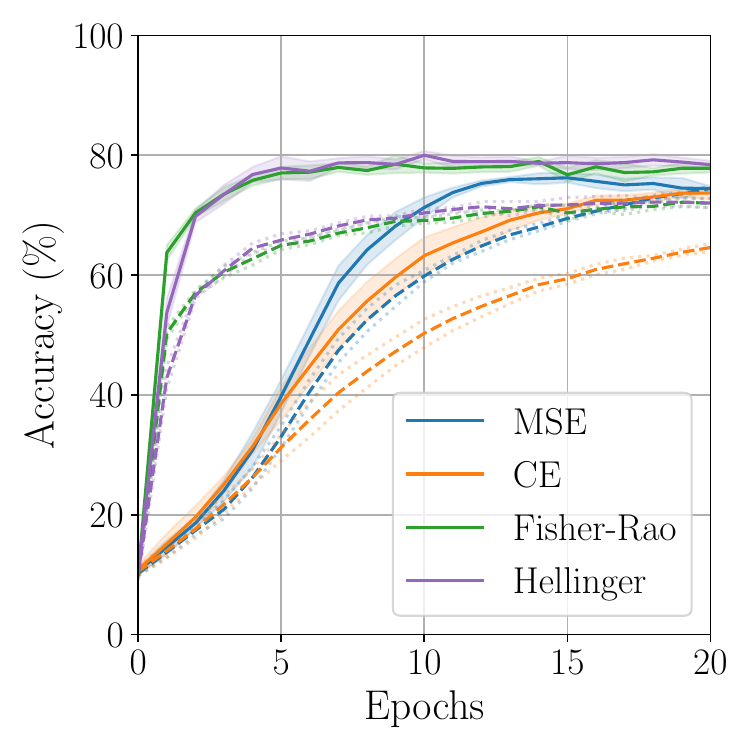}}
	\hspace{1em}
	\subfloat[$\eta=0.5$]{\includegraphics[width=0.3\linewidth]{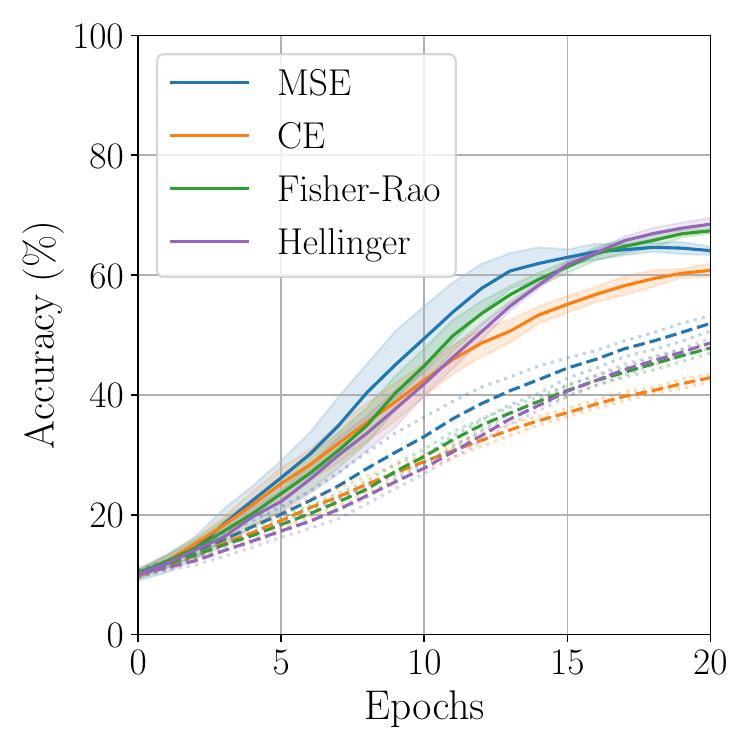}}
	\caption{Training (dashed lines) and test (solid lines) accuracy for synthetic dataset.}
	\label{fig:synthetic-results}
\end{figure}

We note that, in the absence of noise, the final accuracy achieved by all losses is similar, with the cross entropy providing the fastest training; in the presence of uniform label noise, however, the Fisher--Rao and Hellinger losses consistently provide the highest test accuracies, as expected from the analysis in Section~\ref{sec:learning}. Interestingly, they also provide faster learning than the CE loss---we conjecture this may be due to the fact that, when labels are noisy, the learning process becomes more difficult for the CE loss than for Fisher--Rao and Hellinger losses. These results emphasise that the latter two can be suitable for learning under label noise.

\subsection{MNIST}

The MNIST dataset~\cite{mnist} contains 60,000 training examples and 10,000 testing examples of $\text{28} \times \text{28}$ grey images of handwritten digits from 0 to 9. For this experiment, an MLP network is set up, consisting of two hidden layers with 300 and 100 neurons, respectively. We use batch size of~64 and train the model for~40 epochs. The experiments were repeated three times.
Results for noiseless data and label noise with $\eta=0.3$ and $\eta=0.5$ are similarly reported in Figure~\ref{fig:mnist-results} and Table \ref{tab:mnist-results}, where the two best accuracies are boldfaced.

\begin{table}[h]
	\begin{center}
		\begin{minipage}{\textwidth}
			\caption{Test accuracy (\%) for MNIST dataset.}
			\label{tab:mnist-results}
			\centering
			\setlength{\tabcolsep}{4pt}
			\renewcommand{\arraystretch}{1.15}
			\begin{tabular}{@{}ccccc@{}}
				\hline
				Loss  & Noiseless & $\eta = 0.3$ & $\eta = 0.5$ \\
				\hline
				Mean square error    & \textbf{98.41 ($\pm$0.09)} & 98.40 ($\pm$0.10) & 97.93 ($\pm$0.07) \\
				Cross entropy    & \textbf{98.50 ($\pm$0.04)} & 98.14 ($\pm$0.06) & 97.69 ($\pm$0.16) \\
				Fisher--Rao   & 98.32 ($\pm$0.07) & \textbf{98.44 ($\pm$0.05)} & \textbf{98.34 ($\pm$0.14)} \\
				Hellinger    & 98.33 ($\pm$0.05) & \textbf{98.53 ($\pm$0.03)} & \textbf{98.40 ($\pm$0.06)} \\
				\hline
			\end{tabular}
		\end{minipage}
	\end{center}
\end{table}

\begin{figure}[h]
	\centering
	\subfloat[$\eta=0$]{\includegraphics[width=0.3\linewidth]{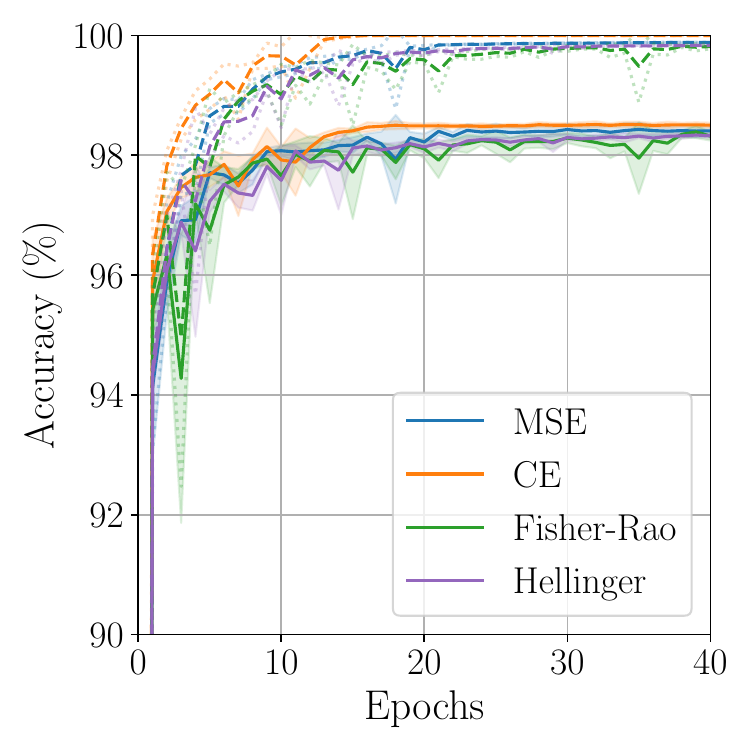}}
	\hspace{1em}
	\subfloat[\label{fig:mnist-b}$\eta=0.3$]{\includegraphics[width=0.3\linewidth]{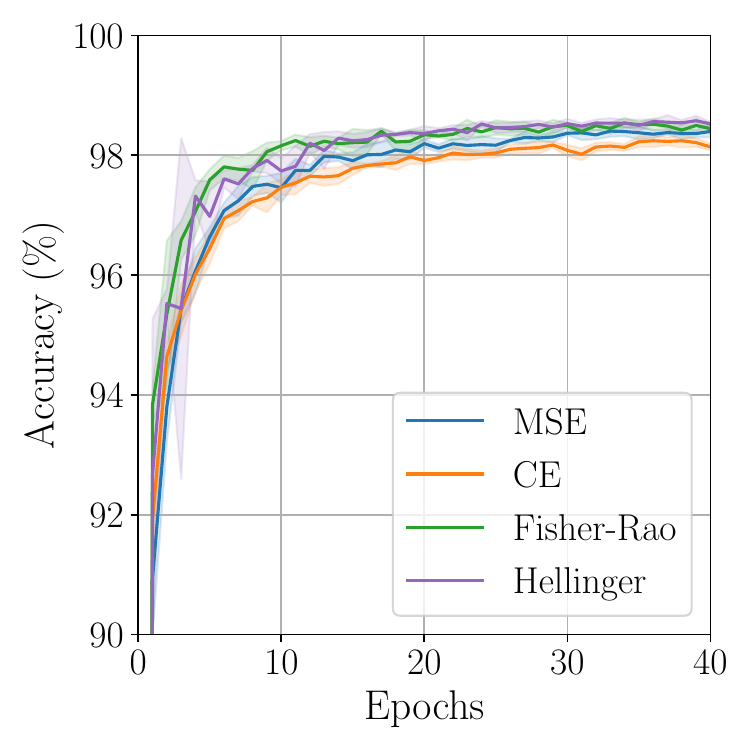}}
	\hspace{1em}
	\subfloat[\label{fig-mnist-c}$\eta=0.5$]{\includegraphics[width=0.3\linewidth]{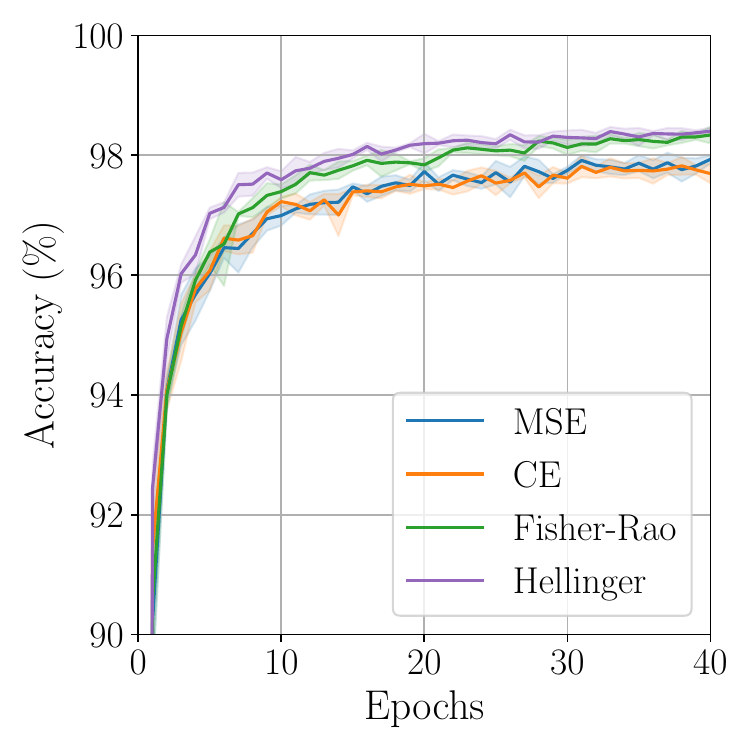}}
	\caption{Training (dashed lines) and test (solid lines) accuracy for MNIST dataset.}
	\label{fig:mnist-results}
\end{figure}

In these experiments, Fisher--Rao achieves competitive performances with noiseless data. Again, in the presence of label noise, both Fisher--Rao and Hellinger show a slight advantage over MSE and CE losses in terms of learning speed and final accuracy, illustrating the benefits discussed in the previous section. Note that in Figures~\ref{fig:mnist-b} and \ref{fig-mnist-c}, the training accuracies are not visible, as they are less than 90\%.

\section{Conclusion and Perspectives} \label{sec:conclusion}

In this work we have studied the Fisher--Rao loss function for learning in the context of supervised classification, especially in the presence of label noise. This loss comes from the Fisher metric on the statistical manifold of discrete distributions, and, for small values, is approached both by the Hellinger and the cross entropy loss functions.
Theoretical analysis of the robutstness and learning speed of this loss have been presented and compared with other commonly used losses. We argue that the Fisher--Rao loss provides a natural trade-off between these two features.
In numerical experiments, we have observed that, for noiseless data, the performance of the Fisher--Rao loss is on par with other commonly used losses, while, in the presence of label noise, we observe the expected robustness.

While we have focused on the classification problem, one could ask whether it is possible to use similar Fisher--Rao loss functions to regression problems as well. 
The first requirement to do so is that the model (e.g., neural network) estimates a (parametric) conditional distribution $p(y\vert\pmb{x})$ of the output $y \in \R$, given the input $\pmb{x} \in \R^n$, instead of a simple predicted value $\hat{y}$. For instance, this is done for normal distributions in~\cite{nix1994}, and for mixture models in~\cite[§~5.6]{bishop2006}.
The second requirement is a closed-form expression for the Fisher--Rao distance in the corresponding statistical manifold.
Finally, it is required to define a target distribution from the training examples, to be compared with the current predicted distribution, in such a way that the distance between them is well defined.
The consequences of such an approach remain yet to be studied and constitute a possible future direction of research.

\section*{Acknowledgments}

This work was partly supported by S\~{a}o Paulo Research Foundation~(FAPESP) grant 2021/04516-8 and by Brazilian National Council for Scientific and Technological Development~(CNPq) grants 141407/2020-4 and 314441/2021-2.

\begin{appendices}
	
\section{Performance Bounds for the MSE Loss} \label{ap:MSE-loss}

\begin{lemma} \label{lemma:bounds-mse}
	The mean squared error loss $L_{\MSE}$ satisfies
	\begin{equation}
		K-1 \le \sum_{i=1}^{K} L_\MSE(i, f(\pmb{x})) \le 2(K-1).
	\end{equation}
\end{lemma}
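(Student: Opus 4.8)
The plan is to collapse the sum into a single expression involving the squared Euclidean norm $\|\pmb{p}\|_2^2$ of the predicted distribution $\pmb{p} = (\sigma \circ f)(\pmb{x})$, and then reduce the whole problem to extremising that norm over the simplex $\Delta^{K-1}$. First I would invoke the closed form \eqref{eq:mse-loss}, namely $L_\MSE(i, f(\pmb{x})) = \|\pmb{p}\|_2^2 - 2p_i + 1$, and sum over the class index $i = 1, \dots, K$. Using that $\pmb{p}$ lies in the simplex, so that $\sum_{i=1}^K p_i = 1$, this gives
\begin{equation*}
	\sum_{i=1}^{K} L_\MSE(i, f(\pmb{x}))
	= K\|\pmb{p}\|_2^2 - 2\sum_{i=1}^K p_i + K
	= K\|\pmb{p}\|_2^2 + (K-2).
\end{equation*}
Thus the two-sided bound on the sum is equivalent to the elementary estimate $\frac{1}{K} \le \|\pmb{p}\|_2^2 \le 1$ on the simplex.

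The next step is to establish these two norm bounds. For the lower bound, I would apply the Cauchy--Schwarz inequality to the vectors $\pmb{p}$ and $(1,\dots,1)$, which gives $1 = \big(\sum_{i=1}^K p_i\big)^2 \le K \sum_{i=1}^K p_i^2$, whence $\|\pmb{p}\|_2^2 \ge \frac{1}{K}$, with equality at the uniform distribution $p_i = 1/K$. For the upper bound, since each $p_i \in [0,1]$ one has $p_i^2 \le p_i$, so $\|\pmb{p}\|_2^2 = \sum_{i=1}^K p_i^2 \le \sum_{i=1}^K p_i = 1$, with equality at a vertex $\pmb{p} = \pmb{e}^{(j)}$. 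Substituting these into the expression above yields the minimum value $K \cdot \frac{1}{K} + (K-2) = K-1$ and the maximum value $K \cdot 1 + (K-2) = 2(K-1)$, which are exactly the claimed bounds.

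Unlike Lemma~\ref{lem:fr-loss-sum-bound}, where the transcendental summand $(\arccos\sqrt{p_j})^2$ forces a Lagrange-multiplier argument together with a boundary induction, here the summand is a quadratic and the extrema of $\|\pmb{p}\|_2^2$ over the simplex are standard, so no genuine obstacle arises. If one preferred a more uniform treatment, the same Lagrange-multiplier machinery would also work: the interior critical point of $\pmb{p} \mapsto \|\pmb{p}\|_2^2$ under $\sum_i p_i = 1$ is the centroid (giving the minimum, since the objective is strictly convex), while the convexity of the objective forces its maximum onto a vertex of the polytope. I would favour the short Cauchy--Schwarz argument, as it makes the equality cases transparent and keeps the proof self-contained.
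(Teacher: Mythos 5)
Your proof is correct and follows essentially the same route as the paper: sum the closed form \eqref{eq:mse-loss} over the classes to reduce the problem to the elementary bounds $\frac{1}{K} \le \|\pmb{p}\|_2^2 \le 1$ on the simplex, then substitute. The only difference is that you justify these norm bounds explicitly (Cauchy--Schwarz and $p_i^2 \le p_i$) where the paper merely invokes convexity, and your intermediate expression $K\|\pmb{p}\|_2^2 + (K-2)$ is in fact the correct one (the paper's displayed ``$+2$'' is a sign typo, as its final evaluation uses $-2$).
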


\begin{proof}
	From \eqref{eq:mse-loss}, we have $L_{\MSE}(y,f(\pmb{x})) = 1 + \|\pmb{p}\|_2^2 - 2p_y$, so that
	\begin{equation*}
		\sum_{i=1}^{K} L_{\MSE}(i,f(\pmb{x}))
		= \sum_{i=1}^{K} \left( 1 + \|\pmb{p}\|_2^2 - 2p_i \right)
		= K\left(1 + \|\pmb{p}\|_2^2\right) - 2
	\end{equation*}
	
	\noindent
	Furthermore, $F(\pmb{p}) \coloneqq \|\pmb{p}\|_2^2 = \sum_{i=1}^{K}p_i^2$ is a convex function in $\pmb{p} = (p_1, \dots, p_K)$, and, with $\pmb{p}$ restricted to the simplex $\Delta^{K-1}$, we have $\frac{1}{K} \le \|\pmb{p}\|_2^2 \le 1$. Thus
	
	\begin{equation*}
		K-1
		= K \left( 1 + \frac{1}{K} \right) - 2
		\le	\sum_{i=1}^{K} L_{\MSE}(i,f(\pmb{x}))
		\le K \left( 1 + 1 \right) - 2
		= 2(K-1).
	\end{equation*}
\end{proof}

\begin{proposition} \label{thm:bounds-mse}
	For the mean squared error loss function, under uniform label noise with $\eta < \frac{K-1}{K}$, we have
	\begin{equation}\label{eq:limitante-A-MSE}
		0 \le R^\eta_{L_\MSE}(f^*) - R^\eta_{L_\MSE}(\hat f) \le A_{\MSE}
	\end{equation}
	
	\noindent
	and
	\begin{equation}\label{eq:limitante-B-MSE}
		B_{\MSE} \le R_{L_\MSE}(f^*) - R_{L_\MSE}(\hat f) \le 0,
	\end{equation}
	
	\noindent
	where $A_{\MSE} \coloneqq A_{\MSE}(\eta) \coloneqq \eta$ and $B_\MSE \coloneqq B_\MSE(K,\eta) \coloneqq -\eta \frac{K-1}{K-1-\eta K}$.
\end{proposition}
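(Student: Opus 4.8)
The plan is to follow the proof of Proposition~\ref{prop:bounds-fr} almost verbatim, substituting the MSE sum bound from Lemma~\ref{lemma:bounds-mse} for the Fisher-Rao one. The inequalities against $0$ in \eqref{eq:limitante-A-MSE} and \eqref{eq:limitante-B-MSE} hold by definition, since $f^*$ minimises $R_{L_\MSE}$ while $\hat f$ minimises $R^\eta_{L_\MSE}$; so only the bounds $A_\MSE$ and $B_\MSE$ require work. The first step is to record the exact decomposition of the noisy risk, obtained by conditioning on $(\pmb x, y)$ and averaging over the noisy label $\tilde y$ exactly as in Proposition~\ref{prop:bounds-fr}:
\begin{equation*}
	R^\eta_{L_\MSE}(f) = \del{1 - \frac{\eta K}{K-1}} R_{L_\MSE}(f) + \frac{\eta}{K-1}\, \Exp_{\pmb x}\Exp_{y\suchthat \pmb x}\!\left[ \sum_{i=1}^K L_\MSE(i, f(\pmb x)) \right].
\end{equation*}

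Writing $S(f)$ for the expected inner sum, Lemma~\ref{lemma:bounds-mse} gives $K-1 \le S(f) \le 2(K-1)$ for every $f$, since the pointwise bound is preserved under the expectation over a probability distribution. Subtracting the decomposition evaluated at $f^*$ and at $\hat f$ yields
\begin{equation*}
	R^\eta_{L_\MSE}(f^*) - R^\eta_{L_\MSE}(\hat f) = \del{1 - \frac{\eta K}{K-1}}\!\big( R_{L_\MSE}(f^*) - R_{L_\MSE}(\hat f) \big) + \frac{\eta}{K-1}\big( S(f^*) - S(\hat f) \big).
\end{equation*}
To prove \eqref{eq:limitante-A-MSE}, I would bound the first term above by $0$ using $R_{L_\MSE}(f^*) \le R_{L_\MSE}(\hat f)$ together with the positivity of $1 - \frac{\eta K}{K-1}$ (this is precisely where $\eta < \frac{K-1}{K}$ is used), and bound the second term by $\frac{\eta}{K-1}\big(2(K-1) - (K-1)\big) = \eta$, which is $A_\MSE$.

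For \eqref{eq:limitante-B-MSE} I would instead solve the same decomposition for $R_{L_\MSE}(f)$, so that $R_{L_\MSE}(f^*) - R_{L_\MSE}(\hat f)$ equals $\big(R^\eta_{L_\MSE}(f^*) - R^\eta_{L_\MSE}(\hat f) - \frac{\eta}{K-1}(S(f^*)-S(\hat f))\big)$ divided by $1 - \frac{\eta K}{K-1}$. Using $R^\eta_{L_\MSE}(\hat f) \le R^\eta_{L_\MSE}(f^*)$ to discard the first (nonnegative) numerator term, bounding $S(f^*)-S(\hat f) \le 2(K-1)-(K-1) = K-1$, and simplifying $1 - \frac{\eta K}{K-1} = \frac{K-1-\eta K}{K-1}$, the right-hand side becomes $-\eta\frac{K-1}{K-1-\eta K} = B_\MSE$.

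The computation is entirely routine, so I do not anticipate a real obstacle; the only point demanding care is the consistent bookkeeping of inequality directions, which rests on the factor $1 - \frac{\eta K}{K-1}$ being strictly positive under the noise-rate hypothesis and on pairing the correct endpoint ($K-1$ or $2(K-1)$) of the sum bound with $f^*$ versus $\hat f$ so that the discarded term indeed has the right sign. All the genuine content is already packaged in Lemma~\ref{lemma:bounds-mse}.
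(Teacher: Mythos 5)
Your proposal is correct and follows essentially the same route as the paper: the identical noisy-risk decomposition, the bounds from Lemma~\ref{lemma:bounds-mse} on the class-summed loss, and the sign bookkeeping via the positive factor $1-\frac{\eta K}{K-1}$. The only cosmetic difference is that you subtract the exact decompositions and bound the difference $S(f^*)-S(\hat f)$ directly, whereas the paper first forms two one-sided inequalities and then subtracts them; these are equivalent.
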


\begin{proof}
	The proof is analogous to that of Proposition~\ref{prop:bounds-fr}, so we present a concise version. We have
	\begin{equation*}
		R^\eta_{L_\MSE}(f)
		=  \left( 1 - \frac{\eta K}{K-1} \right) R_{L_\MSE} (f)
		+ \frac{\eta}{K-1} \Exp_{\pmb{x}} \Exp_{y \rvert \pmb{x}} \left[ \sum_{i=1}^K L_{\MSE } (i, f(\pmb{x})) \right].
	\end{equation*}
	
	\noindent
	Using Lemma~\ref{lemma:bounds-mse}, we obtain
	\begin{equation*}
		R_{L_\MSE}^{\eta}(f^*) - R_{L_\MSE}^{\eta}(\hat{f})
		\le \left( 1 - \frac{\eta K}{K-1} \right) \left( R_{L_\MSE}(f^*) - R_{L_\MSE}(\hat{f}) \right) + A_\MSE
	\end{equation*}
	and
	\begin{equation*}
		R_{L_\MSE}(f^*) - R_{L_\MSE}(\hat{f})
		\ge \frac{\left(K-1\right) \left( R_{L_\MSE}^\eta(f^*) - R_{L_\MSE}^\eta(\hat{f}) \right)}{K-1-\eta K} + B_\MSE.
	\end{equation*}
	
	\noindent
	Finally, using that $R_{L_\MSE}(f^*) \le R_{L_\MSE}(\hat{f})$, $R^\eta_{L_\MSE}(\hat{f}) \le R^\eta_{L_\MSE}(f^*)$, and that ${\eta< \frac{K-1}{K}}$ yields the desired results.
\end{proof}

\end{appendices}

\bibliographystyle{ieeetr}
\bibliography{references}

\end{document}